\documentclass{article}

\usepackage{microtype}
\usepackage{graphicx}
\usepackage{subcaption}
\usepackage{booktabs} % for professional tables
\usepackage{multirow} % for multirow tables

\usepackage{hyperref}

\usepackage[preprint]{icml2026}

\usepackage{amsmath}
\usepackage{amssymb}
\usepackage{mathtools}
\usepackage{amsthm}

\usepackage[capitalize,noabbrev]{cleveref}

\theoremstyle{plain}
\newtheorem{theorem}{Theorem}[section]

\newtheorem{lemma}[theorem]{Lemma}
\newtheorem{corollary}[theorem]{Corollary}
\theoremstyle{definition}

\theoremstyle{remark}

\usepackage[textsize=tiny]{todonotes}

\icmltitlerunning{When Good Enough Is Optimal: Multiplication-Only Matrix Inversion Approximation for Quantized Gated DeltaNet}

\begin{document}

\twocolumn[
  \icmltitle{When Good Enough Is Optimal: Multiplication-Only Matrix Inversion Approximation for Quantized Gated DeltaNet}

  % It is OKAY to include author information, even for blind submissions: the
  % style file will automatically remove it for you unless you've provided
  % the [accepted] option to the icml2026 package.

  % List of affiliations: The first argument should be a (short) identifier you
  % will use later to specify author affiliations Academic affiliations
  % should list Department, University, City, Region, Country Industry
  % affiliations should list Company, City, Region, Country

  % You can specify symbols, otherwise they are numbered in order. Ideally, you
  % should not use this facility. Affiliations will be numbered in order of
  % appearance and this is the preferred way.
  \icmlsetsymbol{equal}{*}

  \begin{icmlauthorlist}
    \icmlauthor{Luoming Zhang}{equal,yyy}
    \icmlauthor{Yuwei Ren}{equal,yyy}
    \icmlauthor{Kui Zhang}{yyy}
    \icmlauthor{Tian Liu}{yyy}
    \icmlauthor{Lingjuan Ge}{yyy}
    \icmlauthor{Denghao Li}{yyy}
    \icmlauthor{Matthew Harper Langston}{yyy}
    %\icmlauthor{}{sch}
    \icmlauthor{Yin Huang}{yyy}
    \icmlauthor{Weiliang Will Zeng}{yyy}
    \icmlauthor{Liang Zhang}{yyy}
    %\icmlauthor{}{sch}
    %\icmlauthor{}{sch}
  \end{icmlauthorlist}

  % \icmlaffiliation{equal}{Equal contribution}
  \icmlaffiliation{yyy}{Qualcomm AI Research, an initiative of Qualcomm Technologies, Inc}

  \icmlcorrespondingauthor{Luoming Zhang}{luomzhan@qti.qualcomm.com}
  \icmlcorrespondingauthor{Yuwei Ren}{ren@qti.qualcomm.com}

  % You may provide any keywords that you find helpful for describing your
  % paper; these are used to populate the "keywords" metadata in the PDF but
  % will not be shown in the document
  \icmlkeywords{Machine Learning, ICML}

  \vskip 0.3in
]

% this must go after the closing bracket ] following \twocolumn[ ...

% This command actually creates the footnote in the first column listing the
% affiliations and the copyright notice. The command takes one argument, which
% is text to display at the start of the footnote. The \icmlEqualContribution
% command is standard text for equal contribution. Remove it (just {}) if you
% do not need this facility.

% Use ONE of the following lines. DO NOT remove the command.
% If you have no special notice, KEEP empty braces:
% \printAffiliationsAndNotice{}  % no special notice (required even if empty)
% Or, if applicable, use the standard equal contribution text:
\printAffiliationsAndNotice{\icmlEqualContribution}

\begin{abstract}

Matrix inversion in chunk-wise parallel linear attention is a major bottleneck for long-context modeling, particularly on NPUs, where forward-substitution-based methods exhibit limited parallelism and poor hardware utilization.
We propose a fast, Matrix Multiplication (MatMul)-based algorithm tailored for strictly lower-triangular matrices arising in chunk-wise linear attention.
Motivated by the rapid growth of Neumann-series terms and the diagonal concentration of the inverse matrix, we employ a truncated Neumann expansion with structural masking and parallel residual correction to eliminate sequential dependencies.
We further extend our method to low-bits INT by mitigating the dynamic range expansion arising from repeated matrix power operations, and adapt the approximation order and residual step to the chunk size to minimize computational cost while preserving the model's accuracy. 
Experiments on Qwen3.5-family models demonstrate up to 5$\times$ kernel-level speedup and a 20\% reduction in decode-layer overhead, while preserving accuracy under both floating-point and low-precision inference. Our method offers an efficient and hardware-friendly solution for scalable linear attention.

\end{abstract}

\section{Introduction}
% As the input length of large language models (LLMs) increases~\cite{DBLP:journals/corr/abs-2510-26692,qwen3.5,gemma4}, standard attention architectures~\cite{DBLP:conf/nips/VaswaniSPUJGKP17} increasingly face challenges in memory consumption and runtime efficiency. To address these challenges, recent work~\cite{DBLP:conf/icml/KatharopoulosV020,DBLP:conf/icml/HuaDLL22,DBLP:journals/corr/abs-2307-08621,DBLP:journals/corr/abs-2312-00752} has shifted toward optimizing linear attention architectures, which preserve a fixed-size state representation of past information and eliminate the quadratic $\mathcal{O}(T^2)$ memory and computational costs associated with conventional attention mechanisms. GatedDeltaNet~\cite{DBLP:conf/iclr/YangKH25} has emerged as a widely used linear attention architecture, with models such as QWen3.5~\cite{qwen3.5} and KiMi~\cite{DBLP:journals/corr/abs-2510-26692} demonstrating its strong capability when scaled to large model sizes. In GatedDeltaNet, historical information is compressed into a single state and updated token by token, resulting in limited parallelism in computation.

As LLM context lengths grow~\cite{DBLP:journals/corr/abs-2510-26692,qwen3.5,gemma4}, standard attention~\cite{DBLP:conf/nips/VaswaniSPUJGKP17} suffers from quadratic memory and runtime costs.  Linear attention methods~\cite{DBLP:conf/icml/KatharopoulosV020,DBLP:conf/icml/HuaDLL22,DBLP:journals/corr/abs-2307-08621,DBLP:journals/corr/abs-2312-00752} address this issue by maintaining a fixed-size recurrent state, avoiding the quadratic $\mathcal{O}(T^2)$ cost. GatedDeltaNet~\cite{DBLP:conf/iclr/YangKH25}, adopted by recent large-scale models such as QWen3.5~\cite{qwen3.5} and KiMi~\cite{DBLP:journals/corr/abs-2510-26692}, achieves strong long-context capability, but its sequential state update remains a key bottleneck for parallel hardware execution.

\begin{figure}[t]
  \begin{center}
    \centerline{\includegraphics[width=\columnwidth]{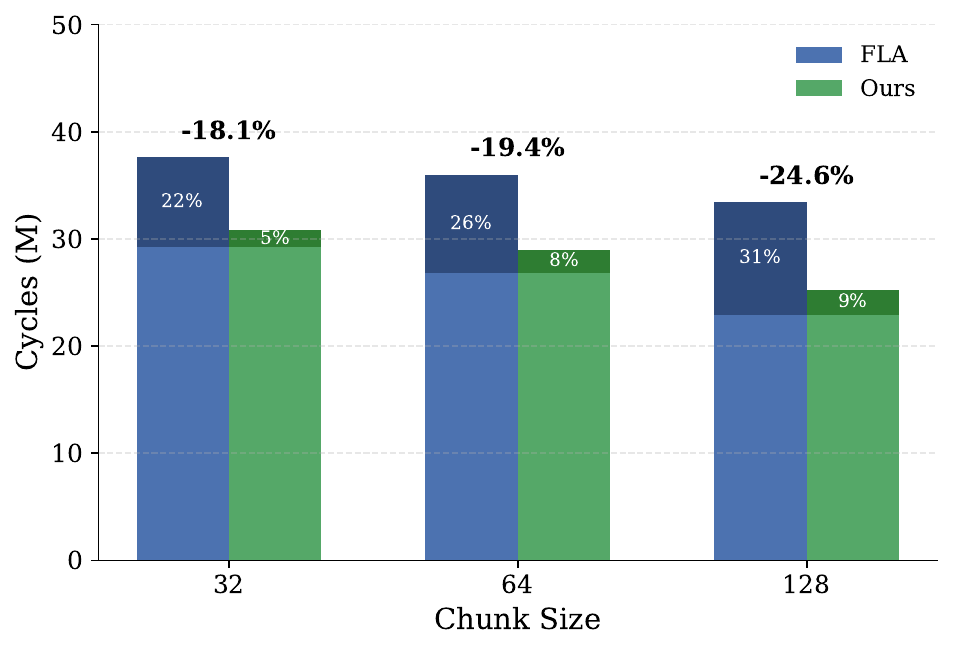}}
    \caption{
      Cycle breakdown across chunk sizes at fixed sequence length ($L=128$) on a GatedDeltaNet layer from Qwen3.5-4B. The lighter segment denotes base computation, while the darker segment highlights matrix-inverse overhead.
    }
    \label{fig:matrix_inverse_ratio}
  \end{center}
  \vskip -0.4 in
\end{figure}

To address this bottleneck, \citet{DBLP:conf/icml/YangWSPK24} reformulate token-level recurrent updates in Gated Linear Attention into chunk-level recurrences, enabling parallel computation within each chunk while preserving recurrent semantics across chunks. When applied to GatedDeltaNet, this strategy removes the outer token-by-token dependency and exposes locally parallel low-rank linear solves. However, it also introduces a new inner bottleneck: triangular matrix inversion.
As shown in Figure~\ref{fig:matrix_inverse_ratio}, matrix inversion is not a negligible overhead. On a GatedDeltaNet layer from Qwen3.5-4B with fixed sequence length ($L=128$), it accounts for 22.3\%--31.4\% of the total cycle cost across chunk sizes, and its relative contribution increases as the chunk size grows.
This overhead stems from the forward-substitution-based inversion procedure, whose limited parallelism and vector-heavy operations are poorly matched to NPU execution, leaving matrix processing units underutilized.

In GatedDeltaNet, the matrix inversion in fragments takes the form $T=(I-A)^{-1}, A \in \mathbb{R}^{k\times k}$. Since $A$ is a strict lower-triangular matrix, its inverse can be efficiently approximated by using the Neumann series expansion~\cite{golub2013matrix}. However, directly expanding the Neumann series for larger chunks can introduce rapidly growing intermediate values, increasing the risk of numerical instability under finite-precision arithmetic.
Previous work~\cite{DBLP:conf/icml/YangWSPK24,gdn_tri_inverse} adopts block-wise matrix inversion, decomposing a large matrix inversion into multiple smaller ones. While this improves numerical behavior, it also restricts the matrix size of each computation block, reducing available parallelism and limiting hardware utilization. These observations raise a central question: \textit{how can we design a MatMul-based algorithm for fast inversion of large matrices to accelerate the overall computation?}

We answer this question by reformulating such triangular inversion as an \emph{approximate-but-sufficient} MatMul-only computation.
Our key observation is that exact inversion is unnecessary: although high-order Neumann terms may produce large intermediate values, their impact mainly lies on deeper sub-diagonals, while the inverse energy is concentrated near the main diagonal.
Thus, a low-order Neumann approximation with structured masking and parallel residual correction preserves accuracy while mapping the dominant computation to matrix multiplications.

Our main contributions are summarized as follows:
\begin{itemize}
%  \item \textbf{Approximate-but-sufficient MatMul-only inversion.}
%   We reformulate strictly lower-triangular matrix inversion as a low-order Neumann approximation with diagonal masking and parallel residual correction, eliminating sequential dependencies while preserving accuracy under FP16.

% \item \textbf{Hardware-efficient execution.}
%   The proposed formulation exposes explicit control over truncation order and residual depth, achieving up to $5\times$ kernel speedup on NPUs without accuracy degradation.
\item \textbf{Approximate-but-sufficient MatMul-only inversion.}
  We replace sequential triangular inversion in chunk-wise GatedDeltaNet with a low-order Neumann approximation, structured diagonal masking, and parallel residual correction, preserving accuracy under floating-point and integer quantization while mapping the computation to MatMul kernels.

  \item \textbf{Hardware-efficient NPU execution.}
  The proposed formulation removes triangular-solve dependencies, exposes dense matrix multiplications with controllable truncation and correction depth, and achieves up to $5\times$ kernel speedup on NPUs without accuracy degradation.
\end{itemize}

% To answer this question, we reformulate matrix inversion as an \emph{approximate-but-sufficient, MatMul-only computation}. Our main contributions are:
% \begin{itemize}
%   \item \textbf{Approximate-but-sufficient MatMul-only inversion.}
%   We reformulate strictly lower-triangular matrix inversion as a low-order Neumann approximation with diagonal masking and parallel residual correction, eliminating sequential dependencies while preserving accuracy under FP and INT quantization.

%   \item \textbf{Hardware-efficient and quantization-robust design.}
%   The proposed formulation exposes explicit control over truncation order and residual depth, enabling near-optimal latency–accuracy trade-offs and achieving up to $5\times$ kernel speedup on NPUs without accuracy degradation.
% \end{itemize}

\section{Background}
\subsection{Matrix inversion in Gated DeltaNet}
Let $A \in \mathbb{R}^{k \times k}$ be a strictly lower-triangular matrix, i.e., $T_{ij} = 0$ for $j \ge i$. 
We consider the inversion of a matrix of the form 
\begin{equation}
T = (I - A)^{-1} .
\label{eq:I_minus_L}
\end{equation}

In Gated DeltaNet~\cite{DBLP:conf/iclr/YangKH25}, the matrix $A$ is constructed from the key representations via a scaled inner product of the form $A = K K^{\top}$. Since an $\ell_2$-normalization layer is applied after the key projection, which bounds the magnitude of the entries in $\rho(A) < 1$. 

\subsection{Forward Substitution}
In previous methods, the matrix inverse is computed via a forward-substitution procedure:
\begin{equation}
T_{i,0:i} \leftarrow T_{i,0:i} + T_{i,0:i}\, T_{0:i,0:i},
\qquad i = 1,\dots,k-1,
\label{eq:forward_sub}
\end{equation}
followed by an identity augmentation
\begin{equation}
T \leftarrow T + I .
\end{equation}

As shown in Eq.~\eqref{eq:forward_sub}, forward substitution requires $k-1$ sequential iterations for a $k\times k$ matrix, where each step depends on all previous results. These loop-carried dependencies hinder efficient parallelization and limit scalability on modern hardware. Flash Linear Attention~\cite{yang2024fla} addresses this by decomposing a large matrix inverse into multiple smaller inverses computed jointly; details are provided in Appendix~\ref{app:option2}.

\subsection{Neumann Series for Strictly Lower-Triangular Matrix Inversion}
When the spectral radius of $A$ satisfies $\rho(A) < 1$, the inverse can be expressed by the Neumann series as
\begin{equation}
(I - A)^{-1} = \sum_{n=0}^{\infty} A^n = I + A + A^2 + \cdots.
\end{equation}

Moreover, since $A$ is strictly lower triangular, it is nilpotent and satisfies
\begin{equation}
A^k = 0.
\end{equation}
Therefore, the above series terminates after finitely many terms, yielding the exact finite expansion
\begin{equation}
(I - A)^{-1} = \sum_{n=0}^{k-1} A^n.
\end{equation}

This finite-series form underpins matrix-inverse approximation in chunk-wise Gated DeltaNet and motivates Neumann-series implementations.

\begin{figure*}[t]
  % \vskip -0.2in
  \begin{center}
    \centerline{\includegraphics[width=\textwidth]{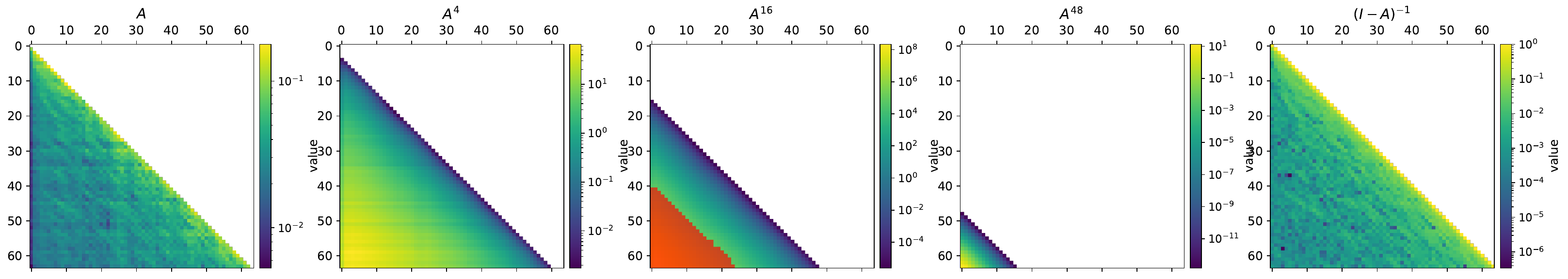}}
    \caption{
      Distribution of $A^n$ over 100 samples. Values exceeding the FP16 limit (65,504) are highlighted in red; 2 samples exhibit overflow, indicating heavy-tailed growth in higher-order terms.
    }
    \label{figure_A_dist}
    \vskip -0.4in
  \end{center}
\end{figure*}

\section{Method}

\subsection{Low-Order Truncation for Efficient Inversion}

% For a strictly lower-triangular matrix $A \in \mathbb{R}^{k \times k}$, the inverse $(I-A)^{-1}$ admits the exact finite expansion with $\sum_{n=0}^{k-1} A^n$. However, computing all $k$ terms is inefficient for large chunk sizes, as it requires a long chain of MatMul operations. 
Although the inverse $(I-A)^{-1}$ admits the exact finite expansion with $\sum_{n=0}^{k-1} A^n$, computing all $k$ terms is inefficient for large chunk size. 
A full expansion is often unnecessary in practice.  Because the inverse energy is concentrated near the main diagonal.

\begin{lemma}[Diagonal Localization of Neumann Series for strictly lower-triangular matrix]
\label{lemma:diagonal}
Let $A \in \mathbb{R}^{k \times k}$ be strictly lower triangular, i.e.,
$A_{ij}=0$ for $i \le j$.
Then for any $n \ge 0$,
\[
(A^{n})_{ij} = 0 \;\Rightarrow\; i-j < n .
\]
\end{lemma}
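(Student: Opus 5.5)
We prove the contrapositive of the stated implication: if $i-j\ge n$, then $(A^n)_{ij}\neq 0$. Read literally for every strictly lower-triangular $A$, this cannot hold, since $A=0$ (with $n\ge 1$) makes every entry of $A^n$ vanish. The plan is therefore to prove it for \emph{generic} $A$, meaning outside a Lebesgue-null, nowhere-dense set of strictly lower-triangular matrices. This is the sense relevant to learned keys in Gated DeltaNet, and we flag the genericity hypothesis explicitly.

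\textbf{Step 1: expand the entry as a sum over paths.} We expand $(A^n)_{ij}$ combinatorially. By the definition of matrix multiplication,
\[
(A^n)_{ij}=\sum_{p_1,\dots,p_{n-1}} A_{i p_1}A_{p_1p_2}\cdots A_{p_{n-1}j}.
\]
Strict lower triangularity kills every term unless $i=p_0>p_1>\cdots>p_{n-1}>p_n=j$. The surviving terms are thus indexed by strictly decreasing chains of length $n$ from $i$ to $j$. Such a chain exists if and only if $i-j\ge n$; for instance, take the consecutive steps $p_t=i-t$ for $t<n$ and jump from $p_{n-1}$ to $j$.

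\textbf{Step 2: show the entry is a nonzero polynomial.} Regard the free entries $\{A_{ab}\}_{a>b}$ as independent indeterminates. Distinct chains use distinct sets of index pairs $(p_{t-1},p_t)$, because the chain is recovered from its edge set. Each surviving term is therefore a distinct monomial with coefficient $1$, and no cancellation can occur. Hence, whenever $i-j\ge n$, the entry $(A^n)_{ij}$ is a nonzero polynomial in the entries of $A$. For $n=0$ the statement is immediate, since $(A^0)_{ij}=\delta_{ij}$ and the condition $i-j\ge 0$ on the lower triangle gives nonvanishing exactly on the diagonal.

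\textbf{Step 3: pass to generic $A$.} The zero set of a nonzero polynomial is a proper algebraic subset, hence null and nowhere dense. Taking the finite union over all $(i,j,n)$ with $n\le k-1$, we conclude that for generic $A$, $(A^n)_{ij}=0$ forces $i-j<n$, which is the statement.

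\textbf{Main obstacle.} The delicate point is transferring genericity to the structured family actually used, $A$ built from the masked product $KK^\top$ with $\ell_2$-normalized keys. That family is not all strictly lower-triangular matrices. Our plan there is to exhibit a single admissible $K$ (for example, nearly parallel unit keys) for which the relevant polynomial is nonzero. Since the polynomial composed with the parametrization $K\mapsto A$ is real-analytic, it then vanishes only on a null set of keys, and the conclusion holds almost surely.
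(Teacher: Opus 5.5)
You are right that the implication, read literally, is false (your $A=0$ example settles it). The repair, however, is to reverse the arrow, not to add a genericity hypothesis. Three places in the paper use
\[
i-j<n \;\Rightarrow\; (A^n)_{ij}=0,
\]
which holds for \emph{every} strictly lower-triangular $A$. They are the sentence right after the lemma (``each power $A^n$ only contributes to the $n$-th sub-diagonal and below''), the claim that the truncation matches $(I-A)^{-1}$ exactly near the diagonal, and the choice $K=N$ for the mask $M^{(N)}$. This direction is what makes the tail $\sum_{n=N+1}^{k-1}A^n$ vanish on the band $i-j\le N$.

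The paper gives no separate proof. The argument is the path expansion that opens its proof of Lemma~\ref{lemma:upper_bound}, and it is exactly your Step~1. When $i-j<n$ there is no strictly decreasing chain $i=p_0>p_1>\cdots>p_n=j$, so every term in the expansion of $(A^n)_{ij}$ contains a factor $A_{ab}$ with $a\le b$ and vanishes. You had the whole proof of the intended lemma there and then pursued the converse instead. Steps~2--3 correctly establish a different fact: for generic $A$ and $1\le n\le k-1$ the band is sharp, i.e.\ $(A^n)_{ij}\neq 0$ whenever $i-j\ge n$. The paper neither states nor needs this. Your ``main obstacle'' about keys built from $KK^\top$ also disappears, since the intended statement is deterministic and requires nothing of $A$ beyond strict lower triangularity.

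Your $n=0$ case is also wrong under your own reading. The contrapositive you set out to prove demands $(A^0)_{ij}\neq 0$ whenever $i-j\ge 0$. Yet $(A^0)_{ij}=\delta_{ij}=0$ for every $i>j$, so for $k\ge 2$ the literal implication fails at $n=0$ for all $A$, generic or not. Your own phrase ``nonvanishing exactly on the diagonal'' concedes this. Under the intended reading the case is trivial: $i-j<0$ forces $i\neq j$, hence $\delta_{ij}=0$.
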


Based on Lemma~\ref{lemma:diagonal}, each power $A^n$ only contributes to the $n$-th sub-diagonal and below, a low-order truncation already captures most of the useful structure of the inverse. Therefore, the Neumann series can be approximated by its first $N$ terms:
\begin{equation}
T^{(0)} = \sum_{n=0}^{N} A^n, \qquad N \ll k,
\end{equation}
as the initial approximation. The truncation error is
\begin{equation}
E_N = \sum_{n=N+1}^{k-1} A^n,
\end{equation}
which decays rapidly as $N$ increases when $\|A\|<1$.

\begin{lemma}[Truncation Error of Finite Neumann Expansion]
\label{lem:truncation_error}
Let $A \in \mathbb{R}^{k \times k}$ be a strictly lower-triangular matrix, 
for any truncation order $N < k-1$ and for any sub-multiplicative matrix norm $\|\cdot\|$ and $\|A\| < 1$, then
\begin{equation}
\|E_N\|
\le
\frac{\|A\|^{N+1}}{1-\|A\|} .
\end{equation}
\end{lemma}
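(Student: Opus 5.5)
The bound follows from the triangle inequality, sub-multiplicativity, and a comparison of the resulting finite sum with a geometric series. I will work with the definition $E_N=\sum_{n=N+1}^{k-1}A^n$ given just before the statement. This is a finite sum, so no convergence issues arise; nilpotency ($A^k=0$) is what makes the finite expansion exact, but the bound itself will not need it.

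First, apply the triangle inequality for the norm to the finite sum:
\[
\|E_N\| \le \sum_{n=N+1}^{k-1}\|A^n\|.
\]
Next, use sub-multiplicativity inductively, $\|A^n\|=\|A\cdot A^{n-1}\|\le \|A\|\,\|A^{n-1}\|$, to get $\|A^n\|\le\|A\|^n$ for every $n\ge 1$. Since all indices in the sum satisfy $n\ge N+1\ge 1$, the case $A^0=I$, where $\|I\|$ may exceed $1$, never enters. This gives
\[
\|E_N\|\le\sum_{n=N+1}^{k-1}\|A\|^n .
\]

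Finally, because $0\le\|A\|<1$, every term is nonnegative. Extending the upper limit to infinity can therefore only increase the sum, and the resulting geometric tail sums exactly:
\[
\sum_{n=N+1}^{k-1}\|A\|^n \le \sum_{n=N+1}^{\infty}\|A\|^n = \frac{\|A\|^{N+1}}{1-\|A\|}.
\]
The hypothesis $N<k-1$ only guarantees that $E_N$ is a nonempty sum. If it were empty, the bound would hold trivially anyway.

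There is no real obstacle. The only step needing care is the power bound $\|A^n\|\le\|A\|^n$: it relies on sub-multiplicativity and is valid only for $n\ge1$, which is exactly the range used. If a sharper bound were wanted, one could keep the finite sum and write it as $\|A\|^{N+1}\bigl(1-\|A\|^{k-1-N}\bigr)/(1-\|A\|)$, but the stated bound does not require this.
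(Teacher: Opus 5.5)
Your proof is correct and matches the paper's argument step for step: the triangle inequality on the finite sum $E_N=\sum_{n=N+1}^{k-1}A^n$, then $\|A^n\|\le\|A\|^n$ from sub-multiplicativity, then comparison with the geometric tail $\sum_{n\ge N+1}\|A\|^n$. You also note that the power bound is only needed for $n\ge 1$, so $\|I\|$ never enters; the paper leaves this point implicit.
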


This lemma and empirical study, in Appendix~\ref{app:Low-Order Truncation}, shows that a much lower truncation order($N \approx 30$) already captures most of the inverse structure, making full expansion unnecessary in practice, although $(I-A)^{-1}$ admits an exact finite expansion for strictly lower-triangular $A$.

\subsection{Numerical Overflow in Truncated Neumann Series}
\label{Sec:distribution_analysis}
We analyze the distribution of individual entries in the Neumann series by examining average across 100 samples. Fig.~\ref{figure_A_dist} illustrates the matrices at different truncation orders. Based on these empirical observations, we derive the following two findings.

\textbf{Although the entries of $A$ are bounded in $[0,1]$, the magnitudes of higher-order terms in the Neumann series grow rapidly with increasing powers.} In particular, as $n$ increases, the distribution of $A^n$ exhibits heavier-tailed distributions, with many entries exceeding the FP16 range.
This effect, particularly visible in higher-order terms, shows that overflow can occur even for well-conditioned matrices. Thus, forward-substitution FP16 Neumann implementations are limited by element-wise growth rather than spectral properties alone.

\begin{lemma}[Exact Entrywise Growth of Powers of Strictly Lower-Triangular Matrices]
\label{lemma:upper_bound}
Let $A \in \mathbb{R}^{k \times k}$ be a strictly lower-triangular matrix satisfying $\rho(A) < 1$
% \[
% A_{ij} = 0 \quad \text{for } i \le j,
% \qquad
% 0 \le A_{ij} \le 1 \quad \text{for } i > j.
% \]

For indices $i > j$, define the diagonal distance $d \triangleq i - j$.
Then, for any integer $n \ge 1$,
\[
(A^n)_{ij} \;\le\; \binom{d-1}{n-1}.
\]
\end{lemma}

Based on Lemma~\ref{lemma:upper_bound}, we can directly derive the numerical stability bounds for the Neumann-series-based inversion. 
For a $64 \times 64$ matrix, the worst-case upper bound is $6$ under FP16 precision and $32$ under FP32 precision.

\textbf{The data distribution of $A^n$ and $(I-A)^{-1}$ exhibits a strong block-wise structure.} The dominant contributions of the inverse matrix $(I - A)^{-1}$ are primarily concentrated along the diagonal and near-diagonal entries. While values that exceed the representable precision range are primarily concentrated in the lower-left (strictly lower-triangular) region of the matrix.

By Lemma~\ref{lemma:diagonal}, the $n$-th Neumann term contributes only to the $n$-th sub-diagonal and below, leading to strong diagonal concentration in $(I-A)^{-1}$. Consequently, truncated Neumann series initialization can efficiently obtain an accurate approximation. 
As analysis in Appendix~\ref{app:Low-Order Truncation}, the Neumann series in practice typically requires more than 20 terms to achieve over 0.99 power ratio.
Therefore, maintaining a small truncation order for numerical stability leads to noticeable accuracy degradation. To address this trade-off, we introduce a residual correction stage following the truncated Neumann approximation, as described in Sec.~\ref{Sec:Residual_correction}.

% As shown in Fig.~\ref{fig:diagonal_power}, most cases converge rapidly, although some require more than 20 steps to exceed 99\% accumulated energy. Numerical overflow depends on both the chunk size and truncation order; therefore, maintaining a small truncation order is critical for ensuring numerical stability.

\subsection{Effective Diagonal Masking}

Low-order truncation preserves most of the meaningful values near the diagonal, which exactly match the corresponding entries of $(I - A)^{-1}$.
However, the truncation error increases with the magnitude of the Neumann-series terms, by Lemma~\ref{lem:truncation_error} .
Such artifacts can adversely affect subsequent computations based on the approximate inverse and are particularly problematic under INT quantization, where min--max calibration is sensitive to outliers.

% To mitigate these effects, we apply an effective diagonal mask that suppresses off-diagonal entries corresponding to already-resolved components of the approximation.
Based on the observed block-wise structure of $A^n$ and $(I-A)^{-1}$, we apply a diagonal mask to separate the already well-resolved components from those with large truncation errors.
The diagonal mask $M^{(K)}_{ij}$ is defined as
\begin{equation}
    M^{(K)}_{ij} =
\begin{cases}
1, & i-j \le K, \\
0, & otherwise.
\end{cases}
\end{equation}
Where the $K$ is set equal to the Neumann series truncation order $N$.
This masking strategy effectively removes extraneous large-magnitude values, improving numerical stability and making the approximation quantization-friendly.

\subsection{Residual Correction}
\label{Sec:Residual_correction}
After applying a low--order Neumann series expansion together with the diagonal mask, we obtain an initial approximation of $(I-A)^{-1}$, denoted as $T^{(0)}$:
\begin{equation}
    T^{(0)} = M^{(N)}  \sum_{n=0}^{N} A^{n}.
\end{equation}

Although diagonal masking suppresses large outliers, non‑negligible approximation errors remain in the lower‑triangular region, necessitating further refinement. Following a matrix‑calculation–based design rule, we adopt a residual correction scheme that iteratively improves the approximation without increasing the Neumann order.

Given the current estimate $T^{(m)}$, the residual and update rule are defined as
\begin{equation}
    \begin{aligned}
        R^{(m)} &= I - (I - A) T^{(m)}, \\
        T^{(m+1)} &= T^{(m)} + T^{(m)} R^{(m)} .
    \end{aligned}
\end{equation}
This formulation applies a first‑order multiplicative correction to compensate for the remaining inversion error. Empirically, our ablation study shows that only 2--3 iterations are sufficient for convergence.

However, the iterative process is inherently sequential: $R^{(m)}$ depends on $T^{(m)}$, and  $T^{(m+1)}$ depends on its predecessor. This step-wise dependency limits parallelism and reduces practical efficiency.
To address this issue, we reformulate residual correction as an accumulation of matrix powers, eliminating explicit iteration. Specifically, we define
\begin{equation}
    \begin{aligned}
        E &= I - (I - A) T^{(0)}, \\
        T &\approx T^{(0)} \sum_{s=0}^{S} E^{s}.
    \end{aligned}
\end{equation}
With $S = 2^{M}$ (typically $S=4$--$8$), this approach enables MatMul-only execution, improving parallelism and hardware efficiency while maintaining correction accuracy.

\begin{algorithm}[t]
\caption{Masked Neumann Initialization with Iterative Residual Correction}
\label{alg:iter_residual}
\begin{algorithmic}[1]
\REQUIRE Strictly lower-triangular $A\in\mathbb{R}^{k\times k}$; Neumann order $N$; residual iterations $m_{\max}$
\ENSURE Approximate inverse $T \approx (I-A)^{-1}$

\STATE Construct mask $M^{(N)}$ where $M^{(N)}_{ij}=1$ if $i-j\le N$, else $0$
\STATE $T^{(0)} \leftarrow I$, \;\; $P \leftarrow I$
\FOR{$n=1$ to $N$}
    \STATE $P \leftarrow P A$
    \STATE $T^{(0)} \leftarrow T^{(0)} + P$
\ENDFOR
\STATE $T^{(0)} \leftarrow M^{(N)} \odot T^{(0)}$
\STATE $E \leftarrow I - (I - A) T^{(0)}$
\STATE $T \leftarrow I$, $P \leftarrow I$
\FOR{$s=0$ to $s_{\max}-1$}
    \STATE $P \leftarrow P E$
    \STATE $T \leftarrow T + P$
\ENDFOR
\STATE \textbf{return} $T \leftarrow TT^{(0)}$
\end{algorithmic}
\end{algorithm}

\subsection{Good-Enough under Low-Bit Quantization}
The \emph{good-enough} principle seeks the minimal Neumann truncation order that preserves numerical stability while introducing Non (or negligible) accuracy loss. 
This rationale is particularly strong under low-bit quantization (e.g., INT8), where the iterative Neumann structure repeatedly applies quantization to the same values, causing quantization noise to dominate the overall error. 
In this regime, the additional approximation error from low-order truncation becomes insignificant, making reduced computation a preferable trade-off for improved hardware efficiency and lower inference latency. 

As shown in Fig.~\ref{fig:quant_range}, increasing the truncation order from $n=3$ to $n=4$ dramatically enlarges the dynamic range: the maximum absolute value expands from 978 to 16216. 
This expansion is highly non-uniform: a few entries grow explosively while most remain tightly concentrated, yielding a pronounced heavy tail. Under uniform quantization, the scaling factors are therefore dictated by these rare extreme values rather than the representative bulk. Consequently, most entries are compressed into a severely limited effective resolution, which exacerbates rounding errors and leads to a significant degradation in numerical fidelity under INT8/INT16 precision constraints.

As a result, higher-order truncation introduces substantial quantization instability while providing only marginal approximation gains. We therefore limit the truncation order ($N \leq 3$) and compensate with residual correction, achieving an efficient matrix inverse that balances numerical stability, accuracy, and efficiency under low-precision settings.

\begin{figure}[thb]
  \begin{center}
    \centerline{\includegraphics[width=\columnwidth]{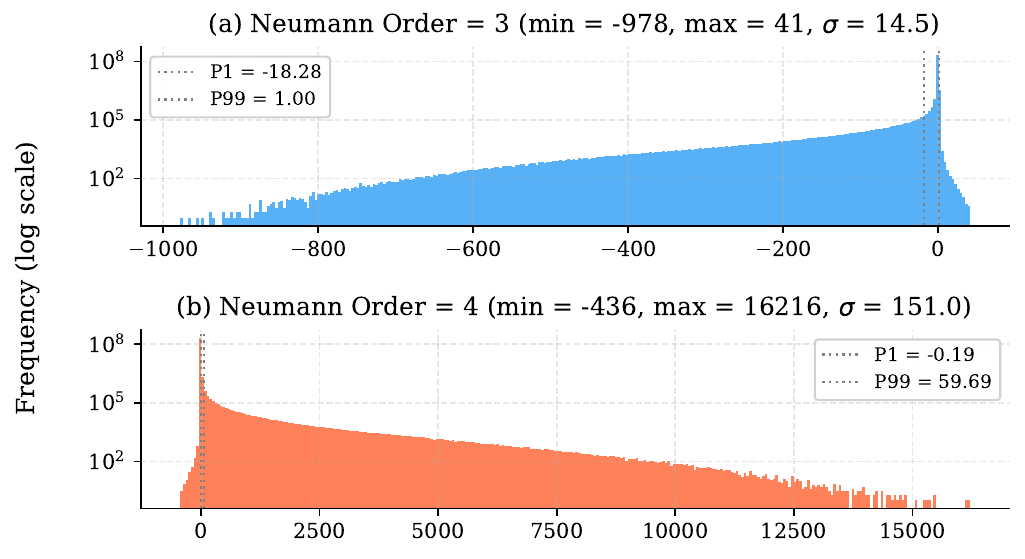}}
    \caption{
     Activation distribution under Neumann truncation for order=3 and 4 for a 64$\times$64 matrix Neumann Series.
    }
    \label{fig:quant_range}
  \end{center}
  \vskip -0.4 in
\end{figure}

\section{Experiments}
\subsection{Experiments Setting}
We select the Qwen3Next~\cite{qwen3technicalreport} and Qwen3.5~\cite{qwen3.5} model families, to complete full accuracy and on-target latency study. 
Unless otherwise stated, all experiments are conducted with a chunk size $k = 64$, a Neumann series order $N = 3$, and $S = 8$ residual correction steps. 

\textbf{Single‑kernel accuracy analysis.}
We conduct experiments using 100 samples from the WikiText‑v2~\cite{DBLP:conf/iclr/MerityX0S17} training set with a sequence length of 4K, based on Qwen3-Next-80B-A3B-Instruct model \footnote{https://huggingface.co/Qwen/Qwen3-Next-80B-A3B-Instruct}. 

\textbf{End‑to‑end accuracy analysis.}
Based on multiple model scales of the Qwen3.5 family, the evaluation includes perplexity (PPL) measured on the WikiText‑v2 validation, and also assess downstream task accuracy on benchmarks such as MMLU~\cite{DBLP:conf/iclr/HendrycksBBZMSS21}, CSR~\cite{DBLP:conf/naacl/ClarkLCK0T19,DBLP:journals/corr/abs-1803-05457,DBLP:conf/aaai/BiskZLGC20,DBLP:conf/acl/ZellersHBFC19} and  multi-modality RealWorldQA~\cite{realworldqa}\footnote{Thinking mode is disabled for efficient evaluation, resulting in a slight drop in RealWorldQA accuracy.}. 
In the quantization, we apply W4A16 to the LLM decoder and W8A16 to the visual decoder. Detailed quantization configurations are provided in the Appendix~\ref{app:quant_setting}.

\textbf{On-target performance analysis.}
We profile the runtime cost of a single matrix inversion operation and a single GatedDeltaNet layer on the Snapdragon 8 Elite Gen 5 platform. We compare our approach against chunk‑wise parallel implementation from Flash Linear Attention.

\subsection{Accuracy Experiments}

\textbf{Single kernel experiment.}
During inference, we collect the inputs and outputs of each matrix inversion operation from all layers and quantify the approximation error of the proposed matrix inversion method using the signal‑to‑noise ratio (SNR). 
As shown in Table~\ref{sample-table}, FP16 and INT16 introduce only marginal MSE increases compared to FP32 while maintaining high SNR. INT16 remains comparable to FP16, demonstrating stable single‑kernel behavior under low‑precision execution. 
\begin{table}[h]
  \caption{Single Kernel accuracy}
  \vskip -0.1 in
  \label{sample-table}
  \begin{center}
    \begin{small}
      \begin{sc}
        \begin{tabular}{lcccc}
          \toprule
          Metric & FP32 & FP16 & INT16 \\
          \midrule
          SNR & 70.02 & 66.78 &  67.16 \\
          \bottomrule
        \end{tabular}
      \end{sc}
    \end{small}
  \end{center}
  \vskip -0.2 in
\end{table}

\textbf{End-to-End experiments.}
Table~\ref{tab:qwen35_e2e} reports end‑to‑end accuracy across Qwen3.5 models from 0.8B to 9B parameters. Across all scales, our method matches the FLA baseline in perplexity, indicating no degradation in language modeling quality. On downstream tasks, including MMLU, CSR, and RealWorldQA, our approach achieves comparable performance with only minor fluctuations (within ±0.3), demonstrating that the proposed modification preserves task accuracy across model sizes. These results confirm that our method introduces no observable accuracy regression while enabling the targeted efficiency optimizations. 
\begin{table}[htb]
  \caption{End-to-end accuracy across Qwen3.5 model sizes.}
  \vskip -0.1 in
  \label{tab:qwen35_e2e}
  \begin{center}
    \begin{small}
      \begin{sc}
        \begin{tabular}{llccccc}
          \toprule
          Model & Method & PPL & MMLU & CSR & RWQA \\
          \midrule
          \multirow{2}{*}{0.8B}
            & FLA &  15.74 & 50.57 & 51.11 & 62.35 \\
            & Ours  &  15.74 & 50.61 & 51.17 & 61.70 \\
          \midrule
          \multirow{2}{*}{2B}
            & FLA & 11.07 & 57.66 & 56.99 & 65.23 \\
            & Ours  & 11.07 & 57.72 & 56.93 & 65.62 \\
          \midrule
          \multirow{2}{*}{4B}
            & FLA & 8.89 & 70.10 & 65.48 & 74.77 \\
            & Ours  & 8.89 & 70.11 & 65.45 & 74.64 \\
          \midrule
          \multirow{2}{*}{9B}
            & FLA & 8.21 & 70.26 & 67.28 & 74.38 \\
            & Ours  & 8.21 & 70.23 & 67.30 & 74.12 \\
          \bottomrule
        \end{tabular}
      \end{sc}
    \end{small}
  \end{center}
\vskip -0.2 in
\end{table}

\textbf{Quantization experiments.}
Table~\ref{tab:qwen35_fp_int16} compares end‑to‑end accuracy under W4A16 quantization across Qwen3.5 model sizes. While FLA quantization leads to noticeable degradation compared with full‑precision baselines, our method preserves performance more effectively, particularly on RealWorldQA, with only marginal differences in PPL, MMLU, and CSR. Across both 0.8B and 4B models, these results demonstrate that the proposed approach better maintains task accuracy under aggressive low‑precision quantization.
Besides, we also explore the low bits setting (INT8) in matrix inverse, and its accuracy is almost same to INT16, shown in Appendix~\ref{sec:w4a16_qwen}
\begin{table}[!thb]
    \setlength{\tabcolsep}{4pt}
    \renewcommand{\arraystretch}{0.95}
  \caption{quantization accuracy comparison across Qwen3.5 model}
  \vskip -0.1 in
  \label{tab:qwen35_fp_int16}
  \begin{center}
    \begin{small}
      \begin{sc}
        \begin{tabular}{llcccc}
          \toprule
          Model & Method & PPL & MMLU & CSR & RWQA \\
          \midrule
          \multirow{3}{*}{0.8B} & FLA-FP & 15.74 & 50.57 & 51.11 & 62.35 \\
                       & FLA-W4A16 & 17.54 & 48.26 & 49.03 & 56.08 \\
                       & Ours-W4A16 & 17.55 & 48.27 & 49.21 & 60.39 \\
          \midrule
          \multirow{3}{*}{4B}   & FLA-FP &  8.89 & 70.10 & 65.48 & 74.77 \\
                       & FLA-W4A16 & 9.66 & 73.34 & 65.05 & 73.33 \\
                       & Ours-W4A16 & 9.67 & 73.29 & 65.03 & 72.81 \\
          \bottomrule
        \end{tabular}
      \end{sc}
    \end{small}
  \end{center}
  \vskip -0.3in
\end{table}

\subsection{On-target performance experiments}

For a chunk-size of $k=32$, we found that $N=3$ and $S=4$ are sufficient to achieve accurate results. To test the end-to-end performance on different CL and chunk size, we included an ablation study in Appendix~\ref{app:cl_ablation}.

\textbf{Single kernel performance analysis.}
As shown in Figure~\ref{fig:single_kernel}, our method reduces single matrix-inverse cycles across chunk sizes, achieving 5.2×, 4.2×, and 4.6× improvements at 32, 64, and 128, respectively.
Chunk size 32 uses 7 matmuls, while 64 requires 11. Despite $4\times$ smaller matmuls at size 32, latency does not scale proportionally, indicating dominance of non-matmul overheads. 
At chunk size 128, splitting and non-contiguous write-back of $64 \times 64$ submatrices introduces memory overhead, which is observed in both our method and FLA and dampens scaling efficiency.

\begin{figure}[b]
  \begin{center}
  \vskip -0.2 in
    \centerline{\includegraphics[width=\columnwidth]{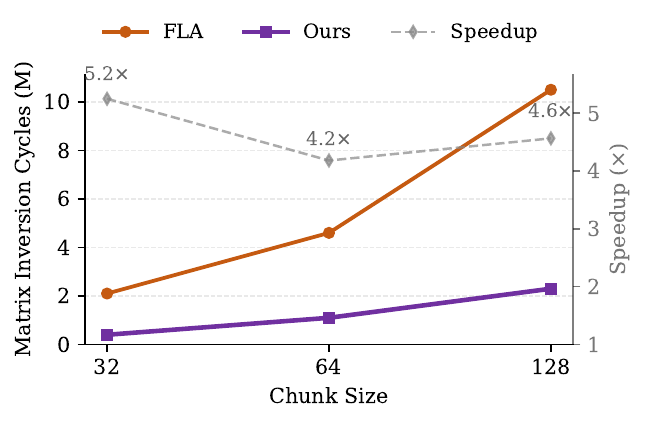}}
    \caption{
     Plot of single kernel performance across different chunk wise. Here, H=32, $D_k$=128.
    }
    \label{fig:single_kernel}
  \end{center}
  \vskip -0.4 in
\end{figure}

\textbf{GatedDeltaNet performance analysis.}
% Figure~\ref{fig:matrix_inverse_ratio} compares the cycle breakdown between FLA and our method under different chunk sizes. While the computation excluding matrix inversion is comparable, FLA incurs a substantial overhead from matrix inversion, accounting for 22.3\% and 25.6\% of total cycles at chunk sizes 32 and 64, respectively. In contrast, our approach significantly reduces the matrix‑inverse cost to 5.2\% and 7.6\%, leading to overall cycle reductions of 18.1\% and 19.4\%. These results demonstrate that our design effectively mitigates the matrix inversion bottleneck, yielding consistent end‑to‑end speedups across chunk sizes.
Figure~\ref{fig:matrix_inverse_ratio} compares the cycle breakdown between FLA and our method across different chunk sizes. While the computation excluding matrix inversion remains comparable, FLA incurs substantial overhead from matrix inversion, accounting for 22.3\%, 25.6\%, and $\sim$31\% of total cycles at chunk sizes 32, 64, and 128, respectively. In contrast, our approach reduces the matrix-inverse cost to 5.2\%, 7.6\%, and $\sim$9\%, yielding overall cycle reductions of 18.1\%, 19.4\%, and 24.6\%. Combining the single-kernel results, we observe that as non-matmul operations become faster, the relative benefit of our matrix inversion method becomes more pronounced.

\subsection{Ablation Study}

\textbf{Effect of each module.}
% Table~\ref{tab:ablation_snr} ablates key components of the Neumann-series approximation. While FP64 with $N=64$ achieves near-perfect accuracy (178.42/96.46 dB mean/worst SNR), reducing to FP16 causes severe instability, with worst-case SNR dropping to $-17.94$ dB. Further truncation ($N=3$) introduces large errors and divergence ($-51.11$ dB worst-case). Residual correction ($S=8$) recovers mean accuracy (80.35 dB) but fails to stabilize worst-case behavior. In contrast, the diagonal mask significantly improves robustness, increasing mean/worst SNR to 86.91/47.98 dB. These results show that low-order approximation alone is insufficient, and that diagonal masking is essential for controlling worst-case errors under low-precision settings.
Table~\ref{tab:ablation_snr} ablates the key components of the Neumann-series approximation. In high precision, FP64 with $N=64$ achieves near-perfect accuracy (178.42/96.46 dB mean/worst SNR). However, reducing precision to FP16 introduces severe instability, with the worst-case SNR dropping to $-17.94$ dB. Further truncation ($N=3$) leads to substantial errors and divergence ($-51.11$ dB worst-case). Residual correction ($S=8$) partially recovers mean accuracy (80.35 dB) but fails to stabilize worst-case behavior. In contrast, applying the diagonal mask markedly improves robustness, increasing mean/worst SNR to 86.91/47.98 dB. These results indicate that low-order approximation alone is insufficient, and that diagonal masking is essential for controlling worst-case errors (overflowing cases) under low-precision settings.
\begin{table}[h]
  \vskip -0.05 in
  \caption{Ablation study on Neumann-series approximation components.}
    \vskip -0.1 in
  \label{tab:ablation_snr}
  \begin{center}
    \begin{small}
      \begin{sc}
        \begin{tabular}{lcc}
          \toprule
          Method & ${SNR}_{mean}$ & ${SNR}_{worst}$ \\
          \midrule
          FP64 $N=64$ & 178.42 & 96.46 \\
          $\Rightarrow$FP16 & 79.53 & -17.94 \\
          $\Rightarrow N=3$ & 42.13 & -51.11 \\
          +$S=8$ & 80.35 & -4.24 \\
          +Diagonal mask & 86.91 & 47.98 \\
          \bottomrule
        \end{tabular}
      \end{sc}
    \end{small}
  \end{center}
  \vskip -0.2in
\end{table}
% As shown in Table~\ref{tab:ablation_snr}, directly reducing precision to FP16 and truncating the Neumann series ($N=3$) leads to severe numerical instability. Introducing residual correction ($S=8$) substantially recovers both mean and worst‑case SNR, while the diagonal mask further suppresses error accumulation along fixed diagonals. Together, these components are critical for stabilizing short‑horizon, low‑precision Neumann approximations.

\textbf{Order of Neumann series.}
Table~\ref{tab:order_step_ablation} studies the interaction between Neumann truncation order $N$ and residual correction steps $S$. Small $N$ with insufficient correction leads to severe numerical instability, resulting in divergence (NaN) or extremely large error. Increasing $S$ consistently stabilizes the approximation, with performance saturating around $S=6\!-\!8$ for $N=3$ and $N=4$. Higher $N$ requires fewer correction steps but worse quantization perforamnce once stability is achieved. These results suggest that INT16 quantization can't cover the dynamic range of $N=4$. Based on these observation, we finally choose $N=3,S=8$ for $64\times64$ matrix inversion.

\begin{table}[h]
\vskip -0.05 in
  \caption{Effect of Neumann series order \(n\) and residual steps \(s\) on numerical stability. Baseline W8A16 is 8.98.}
  \label{tab:order_step_ablation}
  \vskip -0.1 in
  \begin{center}
    \begin{small}
      \begin{sc}
        \begin{tabular}{lcccc}
          \toprule
          & $N=3$ & $N=4$ & $N=5$ & $N=6$ \\
          \midrule
          $S=1$ & NaN   & NaN   & NaN      & 469727.0 \\
          $S=2$ & NaN   & NaN   & 62789.47 & 511.25   \\
          $S=3$ & NaN   & NaN   & 1390.22  & 92.47    \\
          $S=4$ & NaN   & 169.30 & 61.31   & 144.95  \\
          $S=5$ & NaN   & 9.75   & 81.16   & 102.24  \\
          $S=6$ & 23.56 & 9.54   & 63.52   & 135.83  \\
          $S=7$ & 8.99  & 9.54   & 81.64   & 119.13  \\
          $S=8$ & \textbf{8.98} & 9.54 & 66.03 & 117.79 \\
          $S=9$ & \textbf{8.98} & 9.54 & 67.32 & 128.32 \\
          \bottomrule
        \end{tabular}
      \end{sc}
    \end{small}
  \end{center}
  \vskip -0.2 in
\end{table}

\section{Conclusion}

% We identify matrix inversion in chunk-wise GatedDeltaNet as a key bottleneck for long-context linear attention, particularly under low-precision inference. We propose a structure-aware, MatMul-based inversion algorithm combining truncated Neumann expansion, diagonal masking, and parallel residual correction to eliminate sequential dependencies and stabilize computation. Overall, our approach delivers up to 5× kernel-level speedup and 20\% lower decode-layer overhead, while remaining numerically stable under INT16 quantization. These results demonstrate an efficient and hardware-friendly solution for scalable linear attention on NPUs.
We identify matrix inversion in chunk-wise GatedDeltaNet as a critical bottleneck for long-context linear attention, particularly under low-precision inference. To address this, we propose a structure-aware, MatMul-based inversion algorithm that combines truncated Neumann expansion, diagonal masking, and parallel residual correction to eliminate sequential dependencies and improve numerical stability.
Our method achieves up to 5× kernel speedup and 20\% lower decode overhead, remaining robust under INT16 quantization and enabling efficient NPU deployment.
% In the unusual situation where you want a paper to appear in the
% references without citing it in the main text, use \nocite

\section{Impact Statement}
This work improves the efficiency of large language models by enabling hardware-friendly, low-cost matrix inversion, facilitating deployment on resource-constrained devices such as mobile and edge platforms. By reducing compute and energy requirements, it can broaden access to long-context models and support more scalable, sustainable AI systems.
However, increased accessibility of efficient LLMs may also lower the barrier for misuse, including large-scale automated content generation or deployment in sensitive domains without sufficient safeguards. Additionally, low-precision approximation may affect reliability under certain conditions.
We emphasize the importance of careful evaluation and responsible deployment when applying such efficiency-oriented techniques.

% \section{Acknowledge}

% \bibliography{example_paper}

% \bibliographystyle{icml2026}

%%%%%%%%%%%%%%%%%%%%%%%%%%%%%%%%%%%%%%%%%%%%%%%%%%%%%%%%%%%%%%%%%%%%%%%%%%%%%%%
%%%%%%%%%%%%%%%%%%%%%%%%%%%%%%%%%%%%%%%%%%%%%%%%%%%%%%%%%%%%%%%%%%%%%%%%%%%%%%%
% APPENDIX
%%%%%%%%%%%%%%%%%%%%%%%%%%%%%%%%%%%%%%%%%%%%%%%%%%%%%%%%%%%%%%%%%%%%%%%%%%%%%%%
%%%%%%%%%%%%%%%%%%%%%%%%%%%%%%%%%%%%%%%%%%%%%%%%%%%%%%%%%%%%%%%%%%%%%%%%%%%%%%%
\newpage
\appendix
\onecolumn
%%%%%%%%%%%%%%%%%%%%%%%%%%%%%%%%%%%%%%%%%%%%%%%%%%%%%%%%%%%%%%%%%%%%%%%%%%%%%%%
%%%%%%%%%%%%%%%%%%%%%%%%%%%%%%%%%%%%%%%%%%%%%%%%%%%%%%%%%%%%%%%%%%%%%%%%%%%%%%%

\section{Related Works}
Standard attention~\cite{DBLP:conf/nips/VaswaniSPUJGKP17} has suffered from the quadratic time complexity to deal with the long context. Linear attention~\cite{DBLP:conf/aaai/BiskZLGC20} replaces the softmax operation with a positive feature map, enabling the attention computation to be reformulated as two associative matrix multiplications. This reformulation reduces the memory cost from sequence-length dependent to fixed-size, and lowers the computational complexity of similarity evaluation from $\mathcal{O}(T^2)$ to $\mathcal{O}(T)$.
Recent linear-attention-based works construct a \textit{Diagonal-Plus-Low-Rank} (DPLR) structure~\cite{S4,DBLP:conf/iclr/YangKH25,DBLP:journals/corr/abs-2510-26692,deltaformer,pathattention,rwkv7}, defined as $D-ab^T$.During computation, this structured matrix is typically diagonalized in the complex domain via joint diagonalization, enabling efficient evaluation of matrix powers or exponentials.
As such structure do calculation with recurrent, making the model hard to parallel. To solve this problem, Flash Linear Attention~\cite{DBLP:conf/icml/YangWSPK24} introduces chunk-wise parallelism recursive inversion. Following this transformation, the DPLR-based models can be further reformulated into a chunk-wise parallel computation framework. FlahsQLA~\cite{flashqla2025} applies reasonable operator fusion and performance optimization to the forward and backward passes of GDN~\cite{DBLP:conf/iclr/YangKH25} Chunked Prefill. However, these approaches do not address the bottleneck associated with matrix inversion. They are all use forward substitution to calculate the matrix inversion.
As the matrix size grows, block-wise matrix inversion is employed to facilitate parallel computation, as described in Appendix~\ref{app:option2}.
DeltaFormer~\cite{deltaformer} employs an algebraic factorization of the Neumann series to approximate matrix inversion, but overlooks the numerical stability issues under low-bit quantization. In contrast, GDN Tri-Inverse~\cite{gdn_tri_inverse} provides a more robust implementation by explicitly accounting for such numerical constraints. Specifically, it restricts the maximum size of a single matrix inversion to $16 \times 16$ and leverages block-wise inversion to enable parallel computation.

In this work, we achieve numerically stable matrix inversion for matrices up to $64 \times 64$ and further extend the computation to INT16 precision. Compared to prior approaches, our method significantly enlarges the feasible inversion size while maintaining numerical robustness under low-bit quantization. This enables more efficient large-scale matrix operations, reduces the reliance on fine-grained block partitioning, and improves hardware utilization by aligning with integer-friendly accelerators. Consequently, our approach provides a practical foundation for scalable and low-power deployment of linear-attention-based models on edge devices.

\section{Truncation Error of Finite Neumann Expansion}
\label{app:Low-Order Truncation}
\begin{proof}
Since $A$ is strictly lower triangular, it is nilpotent and satisfies $A^k=0$, hence
\begin{equation}
(I-A)^{-1} = \sum_{n=0}^{k-1} A^n.
\end{equation}
Therefore,
\begin{equation}
E_m = \sum_{n=m+1}^{k-1} A^n.
\end{equation}
Applying the triangle inequality and submultiplicativity yields
\begin{equation}
\|E_m\|
\le
\sum_{n=m+1}^{k-1} \|A^n\|
\le
\sum_{n=m+1}^{k-1} \|A\|^n.
\end{equation}
If $\|A\|<1$, the finite sum is further bounded by the tail of a convergent geometric series:
\begin{equation}
\sum_{n=m+1}^{k-1} \|A\|^n
\le
\sum_{n=m+1}^{\infty} \|A\|^n
=
\frac{\|A\|^{m+1}}{1-\|A\|}.
\end{equation}
This completes the proof.
\end{proof}

\paragraph{Empirical validation (chunk size $k=64$).}
We empirically validate the error with chunk size $k=64$.
We construct a strictly lower-triangular matrix $A\in\mathbb{R}^{64\times64}$ whose entries
decay with the distance to the main diagonal (to mimic diagonal-concentrated structure),
and rescale it to satisfy $\|A\|_2=0.55<1$.
We compute the exact inverse via the finite expansion
$T=\sum_{n=0}^{63}A^n$ and the truncated approximation
$T_m=\sum_{n=0}^{m}A^n$ for $m=0,1,\ldots,63$.
We report truncation errors in Frobenius and spectral norms,
$\|E_m\|_F$ and $\|E_m\|_2$, and compare them with the geometric upper bound
$\|A\|_2^{m+1}/(1-\|A\|_2)$ from Lemma~\ref{lem:truncation_error}.

\paragraph{Results.}
Figure~\ref{fig:truncation_k64} shows that truncation errors decrease rapidly with $m$
and enter a near-flat regime far before the full expansion order $k-1$.
Using a simple captured-structure proxy
$
r_m = 1-\|E_m\|_F/\|T\|_F,
$
we observe that $m=3$ already captures $\approx 95.5\%$ of the inverse structure,
$m=5$ captures $\approx 98.8\%$,
and $m=8$ captures $\approx 99.8\%$ in this $k=64$ study. 

\begin{figure}[t]
    \centering
    \includegraphics[width=\linewidth]{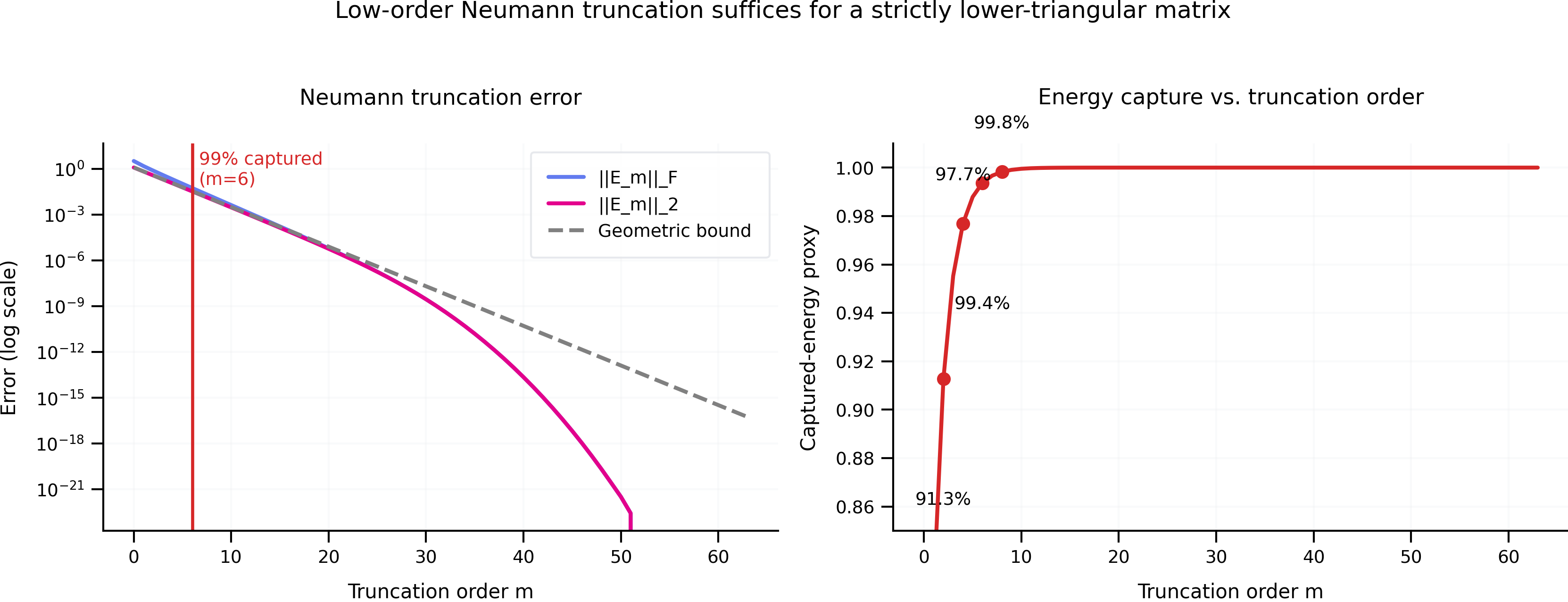}
    \caption{\textbf{Low-order truncation is sufficient for a strictly lower-triangular $64\times64$ example.}
    Left: truncation errors $\|E_m\|_F$ and $\|E_m\|_2$ versus truncation order $m$, together with the geometric-series upper bound
    $\|A\|_2^{m+1}/(1-\|A\|_2)$.
    Right: a captured-structure proxy $1-\|E_m\|_F/\|T\|_F$.
    Errors drop sharply within the first few orders, indicating that most inverse structure is captured by $m\ll k$, making full expansion unnecessary in practice.}
    \label{fig:truncation_k64}
    \vskip -0.2 in
\end{figure}

\begin{figure}[htb]
  \begin{center}
    \centerline{\includegraphics[width=0.5\linewidth]{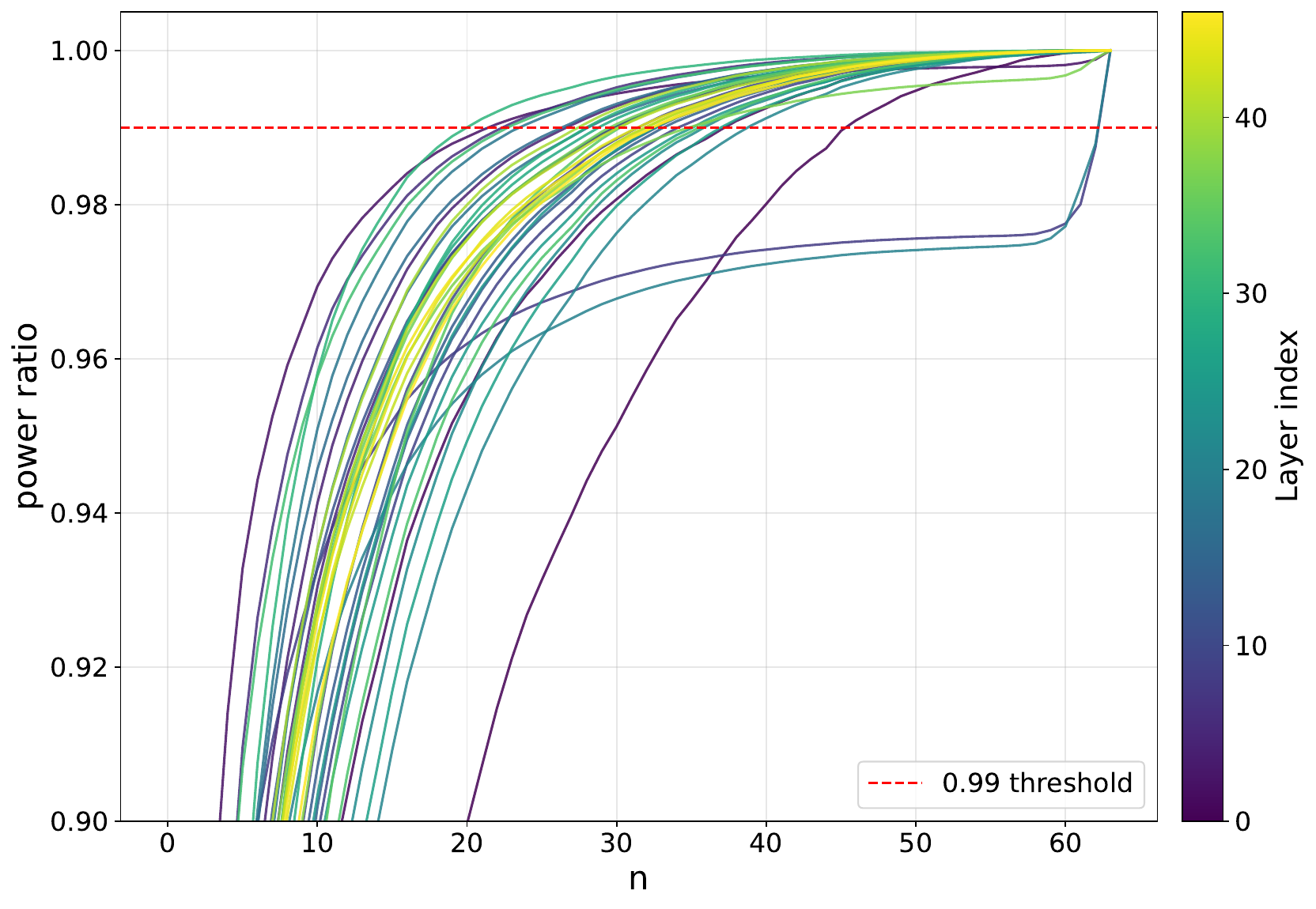}}
    \caption{
     Accumulated diagonal power ratio across layers. The ratio saturates quickly with increasing $n$; while >98\% is typically captured at small $n$, achieving $0.99$ requires substantially larger orders, revealing layer-wise variation and the cost–accuracy trade-off.
    }
    \label{fig:diagonal_power}
    \vskip -0.4 in
  \end{center}
\end{figure}

\paragraph{Real value experiments.} Based on Lemma~\ref{lemma:diagonal}, the truncation error can be efficiently estimated using the accumulated power ratio along the diagonal dimension. We empirically measure this accumulated power ratio on the Qwen3-NextA3B-80B~\cite{qwen3technicalreport} model across different linear attention layers.
As shown in Fig.~\ref{fig:quant_range}, the accumulated power ratio exhibits a consistent trend across layers: it increases rapidly with the order nnn and approaches saturation. In most layers, more than $98\%$ of the power is captured within a relatively small number of steps $(n \approx 10\text{--}20)$, while achieving a higher threshold (e.g., 0.99) typically requires significantly larger orders ($n \approx 30\text{--}50)$. Notably, there exists substantial layer-wise variation, with some layers converging more slowly and requiring much larger nnn to reach the same threshold.
These results indicate that, although the Neumann series converges in principle, achieving high-accuracy approximation uniformly across layers demands a large truncation order. This further reinforces the practical limitation of using high-order truncation, as it not only increases computational cost but also exacerbates numerical instability and quantization issues discussed earlier.

Although $(I-A)^{-1}$ admits an exact finite expansion for strictly lower-triangular $A$,
both our analysis (Lemma~\ref{lem:truncation_error}) and empirical validation (Figure~\ref{fig:truncation_k64})
show that a much lower truncation order already captures most of the inverse structure.
We therefore use a low-order truncated expansion as an initializer and recover the remaining tail error via residual correction.

\section{Proof of Lemma \ref{lemma:upper_bound}}
\begin{proof}
By expanding the matrix product,
\[
(A^k)_{ij}
=
\sum_{j < t_1 < \cdots < t_{k-1} < i}
A_{i,t_{k-1}} A_{t_{k-1},t_{k-2}} \cdots A_{t_1,j},
\]
where strict lower-triangularity enforces strictly increasing intermediate indices.
Since each factor satisfies $0 \le A_{pq} \le 1$, every product term is bounded above by $1$.
The number of admissible index sequences equals the number of ways to choose
$k-1$ indices from the $d-1 = i-j-1$ integers between $j$ and $i$, yielding
$\binom{d-1}{k-1}$.

Tightness follows by choosing $A_{pq} = 1$ for all $p > q$, in which case every product
term equals $1$ and the bound is attained with equality.
\end{proof}

\begin{corollary}[Worst-Case Characterization]
Under the assumptions of Lemma \ref{lemma:upper_bound},
\[
\max_{\,0 \le A_{ij} \le 1} (A^k)_{ij}
=
\binom{d-1}{k-1}.
\]
Thus, the binomial coefficient gives the exact worst-case growth of individual entries
of $A^k$, depending only on the diagonal distance $d=i-j$ and the power $k$.
\end{corollary}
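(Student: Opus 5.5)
The plan is to prove the corollary in two halves: an upper bound valid for every admissible $A$, and a construction attaining it. Both halves rest on the path expansion used in the proof of Lemma~\ref{lemma:upper_bound}. Throughout, the maximum is taken over strictly lower-triangular $A$ with $0 \le A_{pq} \le 1$ for $p>q$. This entrywise box constraint is what makes the statement meaningful; the spectral condition $\rho(A)<1$ holds automatically, since every strictly lower-triangular matrix is nilpotent and so $\rho(A)=0$. For $k\ge 1$ and $d=i-j\ge 1$, I would first write
\[
(A^k)_{ij}=\sum_{j<t_1<\cdots<t_{k-1}<i} A_{i,t_{k-1}}A_{t_{k-1},t_{k-2}}\cdots A_{t_1,j}.
\]
Strict lower-triangularity kills every term in which the indices fail to strictly decrease along the chain from $i$ to $j$. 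The surviving index chains correspond bijectively to $(k-1)$-element subsets of the $d-1$ integers strictly between $j$ and $i$, so there are exactly $\binom{d-1}{k-1}$ of them.

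\textbf{Upper bound.} Each factor lies in $[0,1]$, so each surviving product lies in $[0,1]$. Summing over the $\binom{d-1}{k-1}$ chains then gives $(A^k)_{ij}\le\binom{d-1}{k-1}$. This is exactly Lemma~\ref{lemma:upper_bound}, which I would simply invoke.

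\textbf{Attainment.} I would take $J$ with $J_{pq}=1$ for $p>q$ and $0$ otherwise. This matrix is admissible, and every chain product equals $1$. Hence $(J^k)_{ij}$ equals the number of chains, namely $\binom{d-1}{k-1}$. The admissible set is compact and $A\mapsto (A^k)_{ij}$ is a polynomial, so the supremum is a genuine maximum; in any case the explicit $J$ already attains it.

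\textbf{Degenerate cases.} These must be checked so that the identity holds literally. If $k>d$, both sides are $0$, consistent with Lemma~\ref{lemma:diagonal}. If $k=1$, the bound is $\binom{d-1}{0}=1$, attained by $A_{ij}=1$. If $k=0$, we have $A^0=I$, whose off-diagonal entries are $0=\binom{d-1}{-1}$ under the usual convention. Entries with $i\le j$ are excluded, since $d$ is defined only for $i>j$.

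\textbf{Main obstacle.} The only subtle point is the exact count of surviving chains, that is, verifying that strict lower-triangularity forces strictly decreasing intermediate indices and that no other terms survive. This is a direct induction on $k$ using $(A^{k})_{ij}=\sum_{t} A_{it}(A^{k-1})_{tj}$ with $j<t<i$. Equivalently, $(J^k)_{ij}$ satisfies Pascal's recurrence $\sum_{t=j+1}^{i-1}\binom{t-j-1}{k-2}=\binom{d-1}{k-1}$ by the hockey-stick identity, which confirms both the count and the equality case.
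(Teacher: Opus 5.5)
Your proposal is correct and follows the paper's own argument: the path expansion with strictly increasing intermediate indices, the bound of each product by $1$ giving at most $\binom{d-1}{k-1}$, and tightness via the all-ones strictly lower-triangular matrix. Your additional remarks are accurate refinements that the paper leaves implicit: $\rho(A)<1$ is vacuous for nilpotent $A$, so the entrywise constraint $0\le A_{pq}\le 1$ is the hypothesis that actually matters, and you also check the degenerate cases $k>d$ and $k\le 1$.
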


\section{Proof of reformulation for Residual correction}
\begin{proof}
By definition, $(I-A)T^{(0)} = I - E$. Since $M$ is nonsingular and $\rho(E)<1$, the matrix $(I-E)$ is invertible and
\begin{equation}
T^{(0)} = (I-A)^{-1}(I-E)
\quad \Rightarrow \quad
(I-A)^{-1} = T^{(0)}(I-E)^{-1}.
\end{equation}
When $\rho(E)<1$, the Neumann series converges:
\begin{equation}
(I-E)^{-1} = \sum_{s=0}^{\infty} E^{s}.
\end{equation}
Thus $M^{-1} = T^{(0)}\sum_{s=0}^{\infty}E^s$. For truncation at $S$, the remainder is
\begin{equation}
M^{-1} - T^{(S)} = T^{(0)}\sum_{s=S+1}^{\infty} E^{s} = T^{(0)}E^{S+1}\sum_{t=0}^{\infty}E^{t}.
\end{equation}
Taking norms and using submultiplicativity gives
\begin{equation}
\|M^{-1}-T^{(S)}\|
\le
\|T^{(0)}\|\cdot \|E\|^{S+1}\sum_{t=0}^{\infty}\|E\|^{t}
=
\|T^{(0)}\|\cdot \frac{\|E\|^{S+1}}{1-\|E\|},
\end{equation}
which completes the proof.
\end{proof}

\section{Block-wise Matrix Inversion for Chunk-wise Processing}
\label{app:option2}

Let $T \in \mathbb{R}^{n \times n}$ be a strictly lower triangular matrix arising from chunk-wise Gated DeltaNet,
\begin{equation}
T = I - A, \qquad A_{ij} = 0 \;\; \forall\, i \le j .
\end{equation}

We partition $T$ into $K$ blocks of size $b \times b$:
\begin{equation}
T =
\begin{bmatrix}
T_{11} & 0 & \cdots & 0 \\
T_{21} & T_{22} & \cdots & 0 \\
\vdots & \vdots & \ddots & \vdots \\
T_{K1} & T_{K2} & \cdots & T_{KK}
\end{bmatrix}.
\end{equation}

The inverse preserves the same block lower-triangular structure,
\begin{equation}
T^{-1} =
\begin{bmatrix}
S_{11} & 0 & \cdots & 0 \\
S_{21} & S_{22} & \cdots & 0 \\
\vdots & \vdots & \ddots & \vdots \\
S_{K1} & S_{K2} & \cdots & S_{KK}
\end{bmatrix}.
\end{equation}

Each diagonal block is inverted independently:
\begin{equation}
S_{ii} = T_{ii}^{-1}, \qquad i = 1,\dots,K .
\end{equation}

For off-diagonal blocks $(i>j)$, block-wise forward substitution satisfies
\begin{equation}
\sum_{k=j}^{i} T_{ik} S_{kj} = 0 ,
\end{equation}
which yields the recursion
\begin{equation}
S_{ij} = - T_{ii}^{-1} \sum_{k=j}^{i-1} T_{ik} S_{kj},
\qquad i>j .
\end{equation}

In particular, the first off-diagonal block is given by
\begin{equation}
S_{i,i-1} = - T_{ii}^{-1} T_{i,i-1} T_{i-1,i-1}^{-1}.
\end{equation}

In chunk-wise Gated DeltaNet, each block corresponds to one chunk, and the output is computed as
\begin{equation}
Y = T^{-1} X ,
\end{equation}
using block-wise inversion (Eq.~4) and recursive accumulation (Eq.~6).

\section{Why Algebraic Factorization Does Not Help on NPUs.}
While factorizing the Neumann series reduces algebraic depth, it fundamentally alters the execution graph. The standard Neumann iteration computes powers of $A$ via the recurrence
\begin{equation}
X_{k+1} = X_k A,
\end{equation}
which forms a single linear dependency chain. Each iteration requires one matrix multiplication, maintains a single live accumulator, and admits kernel fusion across steps. This execution pattern is well aligned with NPU architectures, which are optimized for narrow, linear dependency graphs.
In contrast, factorized forms compute partial sums of the form
\begin{equation}
G_n = \sum_{i=0}^{2^n-1}X_{i}, G_n = G_{n-1}(I+A^{2^{(n-1)}}),
\end{equation}
where the term $I + A^{2^{(n-1)}}$ must be materialized prior to its multiplication with $G_{n-1}$. Consequently, each update introduces a two-stage dependency: first for power accumulation (to obtain $A^{2^k}$), and second for the subsequent multiplication. This effectively doubles the dependency width, increasing the number of live intermediate tensors and hindering fusion into a single accumulator chain.

On NPUs, such widened dependency graphs lead to higher scheduling overhead, increased memory traffic, and reduced pipeline utilization. Consequently, despite reduced algebraic depth, the factorized Neumann form incurs strictly higher execution cost than the unfactorized series.

\section{Quantization Experiment}
\label{app:quant_setting}
To further improve latency and power efficiency, we apply integer-only (INT) activation quantization to the chunk-wise Gated DeltaNet. We adopt a standard uniform quantization operator
\begin{equation}
    q(x)=clip(round(\frac{x}{\Delta}), qmin, qmax)
\end{equation}
where $\Delta$ denotes the quantization scale.
Depending on the hardware implementation, asymmetric quantization maps activations to the range $[0, 2^b - 1]$, while symmetric quantization uses a signed range of $[-2^{b-1}, 2^{b-1}-1]$, with $b$ indicating the bit width (e.g., $b=8$ or $16$).

\subsection{Quantization Settings}
We summarize the quantization setting in the following table.
For visual model, we do plain calibration with 25 image samples from LLaVA-COCO dataset~\cite{llava}. For text model, we use AdaScale from AIMET~\cite{aimet} with 1000 samples from C4~\cite{c4}.

\begin{table}[h]
\centering
\caption{
W4A16 quantization configuration for Qwen3.5-4B. Decoder weights are quantized to INT4 with per-channel scaling, while all activations are retained in INT16.
}
\begin{tabular}{lccc}
\toprule
\textbf{Module} & \textbf{Weight} & \textbf{Activation} & \textbf{Quantization Scheme} \\
\midrule
Decoder (LM) & INT4 & INT16 & Per-channel, symmetric  \\
Matrix Inversion & -- & INT16/INT8 & Per-tensor \\
Embedding & INT16 & INT16 & Per-tensor, asymmetric  \\
LM Head & INT8 & INT16 & Per-channel, symmetric  \\
Normalization & INT16 & INT16 & Per-tensor, asymmetric  \\
Vision Encoder (VLM) & INT8 & INT16 & Per-channel, symmetric  \\
Other Ops (Mul, Add) & -- & INT16 & Per-tensor, asymmetric/symmetrics  \\
\bottomrule
\end{tabular}
\label{tab:w4a16_qwen}
\end{table}

\subsection{Matrix inversion: INT16 vs. INT8}
\label{sec:w4a16_qwen}
The proposed method effectively addresses the numerical stability issues in matrix inversion, to maintain a compact dynamic range, making it well for INT quantization. In the experiment, we further evaluate INT8 setting for matrix inversion, where keeping all others consistent with the W4A16 setting. Table~\ref{tab:qwen35_int8_int16} compares INT16 and INT8 quantization for matrix inversion.
The results show that INT8 achieves comparable perplexity and downstream accuracy to INT16, demonstrating that our method remains robust even under lower-bit integer quantization.

\begin{table}[!thb]
    \setlength{\tabcolsep}{4pt}
    \renewcommand{\arraystretch}{0.95}
  \caption{INT8 VS INT16 comparison in matrix inversion, CS=128}
  \label{tab:qwen35_int8_int16}
  \begin{center}
    \begin{small}
      \begin{sc}
        \begin{tabular}{llcccc}
          \toprule
          Model & Method & Matrix inverse & PPL & IF-Eval\\
          \midrule
          \multirow{3}{*}{Qwen3.5 4B}   & FLA-FP   & FP16 &  8.89 & 0.83 \\
                                        & Ours-W4A16 & INT16  &  9.71 & 0.78 \\
                                        & Ours-W4A16 & INT8   &  9.71 & 0.77 \\
          \bottomrule
        \end{tabular}
      \end{sc}
    \end{small}
  \end{center}
  \vskip -0.1in
\end{table}

\section{Matrix inversion for 32$\times$32 matrix and 128 $\times$ 128 matrix}
For the $32 \times 32$ matrix, we further evaluate the impact of Neumann series order and iteration steps. As shown in Table~\ref{tab:order_step_ablation_cs_32}, the $32 \times 32$ matrix exhibits a similar trade-off to the $64 \times 64$ case. Due to its smaller spectral magnitude, the Neumann series for the $32 \times 32$ matrix converges more rapidly. Consequently, the optimal configuration of $S=3$ steps and $N=4$ order in FP16 also generalizes well to the INT16 setting.

\begin{table}[th]
  \caption{Effect of Neumann series order \(n\) and residual steps \(s\) on numerical stability. Baseline FP16 is 8.89.}
  \label{tab:order_step_ablation_cs_32}
  \begin{center}
      \begin{sc}
        \begin{tabular}{lcccc}
          \toprule
          & $N=1$ & $N=2$ & $N=3$ & $N=4$ \\
          \midrule
          $S=1$ & NaN   & NaN   & NaN      & 9.902 \\
          $S=2$ & NaN   & NaN   & 9.22 & 8.888   \\
          $S=3$ & NaN   & NaN   & 8.895  & \textbf{8.890}    \\
          $S=4$ & NaN   & 8.900 & \textbf{8.890}   & \textbf{8.890}  \\
          $S=5$ & NaN   & 8.891   & \textbf{8.890}   & \textbf{8.890}  \\
          $S=6$ & NaN & \textbf{8.890}   & \textbf{8.890}   & \textbf{8.890}  \\
          $S=7$ & NaN  & \textbf{8.890}   & \textbf{8.890}   & \textbf{8.890}  \\
          $S=8$ & 8.891 & \textbf{8.890} & \textbf{8.890} & \textbf{8.890} \\
          \bottomrule
        \end{tabular}
      \end{sc}
  \end{center}
  \vskip -0.1in
\end{table}

For the $128 \times 128$ matrix, according to Lemma~\ref{lemma:upper_bound}, even with $N=3$, the worst-case value can grow to $341{,}376$, which exceeds the dynamic range of both FP16 and INT16, leading to severe numerical instability. To mitigate this issue, we adopt a block-wise matrix inversion strategy, decomposing the problem into smaller sub-matrices with a minimum size of $64 \times 64$.

\section{Ablation on Different context length}
\label{app:cl_ablation}
We evaluate WikiText-v2 on Qwen3.5-4B across context lengths from 4k to 32k to assess the effectiveness of our method. In FP32, our approach exactly matches the FLA baseline across all context lengths, demonstrating numerical equivalence without any degradation. Under INT16 quantization, it achieves comparable performance at short contexts and consistent improvements at longer contexts.
At 32k, increasing the chunk size to CS=128 reduces perplexity from 9.201 to 8.998 ($-$0.203), significantly narrowing the FP32–INT16 gap. Notably, the impact of chunk size is context-dependent: while CS=128 slightly degrades performance at $\le$8k, it becomes beneficial beyond 16k, suggesting improved numerical stability in the long-context regime.

\begin{table}[thb]
\centering
\caption{Perplexity comparison across context lengths on WikiText-v2.
FP32 results show identical performance between na\"ive and our method.
Quantized results (INT16) demonstrate that our method consistently improves perplexity,
with larger chunk sizes (CS=128) yielding further gains at long context (e.g., 32k).}
\begin{tabular}{c|c|cc|cc}
\toprule
\multirow{2}{*}{Context} 
& \multirow{2}{*}{FP32} 
& \multicolumn{2}{c|}{FP32 (Ours)} 
& \multicolumn{2}{c}{Quantized (INT16)} \\
\cline{3-6}
& 
& CS=64 & CS=128 
& Na\"ive (CS=64) 
& Ours (CS=64 / 128) \\
\midrule
4k  & 8.89 & 8.89 & 8.89 & 9.658 & 9.664 / 9.717 \\
8k  & 8.66 & 8.66 & 8.66 & 9.373 & 9.366 / 9.405 \\
16k & 8.46 & 8.46 & 8.46 & 9.216 & 9.197 / 9.179 \\
32k & 8.41 & 8.41 & 8.41 & 9.201 & 9.188 / 8.998 \\
\bottomrule
\end{tabular}

\label{tab:ppl_context}
\end{table}

\end{document}